\definecolor{Gray}{gray}{0.90}
\newcommand{\W}{\mathbf W}
\newcommand{\ftheta}{f_{\phi}}
\newcommand{\xx}{\mathbf x}
\newcommand{\XXb}{\mathbb{X}_{\text{base}}}
\newcommand{\XXs}{\mathcal{S}}
\newcommand{\XXq}{\mathcal{Q}}
\newcommand\norm[1]{\left\lVert#1\right\rVert}
\newcommand{\ww}{\mathbf w}
\newcommand{\yy}{\mathbf y}
\newcommand{\g}{\mathbf{g}}
\DeclareMathOperator*{\argmax}{arg\,max}
\newtheorem{prop}{Proposition}
\title{Task Adaptive Feature Transformation for One-Shot Learning}
\name{Imtiaz Masud Ziko$^{1}$, Freddy Lecue$^2$, Ismail Ben Ayed$^3$}
\address{$^1$ Thales Canada, $^2$ JPMorgan Chase, $^3$ \'ETS Montreal,
}
\begin{document}
%
\maketitle
\begin{abstract}
We introduce a simple non-linear embedding adaptation layer, which is fine-tuned on top of fixed pre-trained features for one-shot tasks, improving significantly transductive entropy-based inference for low-shot regimes. Our norm-induced transformation could be understood as a re-parametrization of the feature space to disentangle the representations of different classes in a task specific manner. It focuses on the relevant feature dimensions while hindering the effects of non-relevant dimensions that may cause overfitting in a one-shot setting. We also provide an interpretation of our proposed feature transformation in the basic case of few-shot inference with K-means clustering. Furthermore, we give an interesting bound-optimization link between K-means and entropy minimization. This emphasizes why our feature transformation is useful in the context of entropy minimization. We report comprehensive experiments, which show consistent improvements over a variety of one-shot benchmarks, outperforming recent state-of-the-art methods. 
\end{abstract}
\begin{keywords}
Few-Shot Learning, Domain adaptation
\end{keywords}
\section{Introduction}
\label{sec:intro}

Deep learning models have achieved impressive success in a breadth of applications. However, these successes mostly rely on learning from huge amounts of annotated data, which requires a time-consuming and expensive process. Deep learning models still have difficulty generalizing to novel classes unseen during training, given only a few labeled instances for these new classes. In this context, few-shot learning research has attracted wide interest recently. For example in a one-shot learning setting, a model is first trained on substantial labeled data over an initial set of classes, commonly called the base classes. Then, supervision is confined to one labeled example per novel class, which is not observed during base training. The model is then fine-tuned on these labeled examples from the novel classes (the support set) and evaluated on the unlabeled samples (the query set). Traditional fine-tuning would result in over-fitting in such low-data regimes. A large body of works investigated few-shot learning via meta-learning strategies, such as the very popular prototypical networks \cite{snell2017prototypical}. Meta-learning creates a set of few-shot tasks (or episodes), with support and query samples that simulate generalization difficulties during testing and train the model to generalize well on these tasks. 

\begin{figure*}[htbp]
\centering
\includegraphics[width=\textwidth]{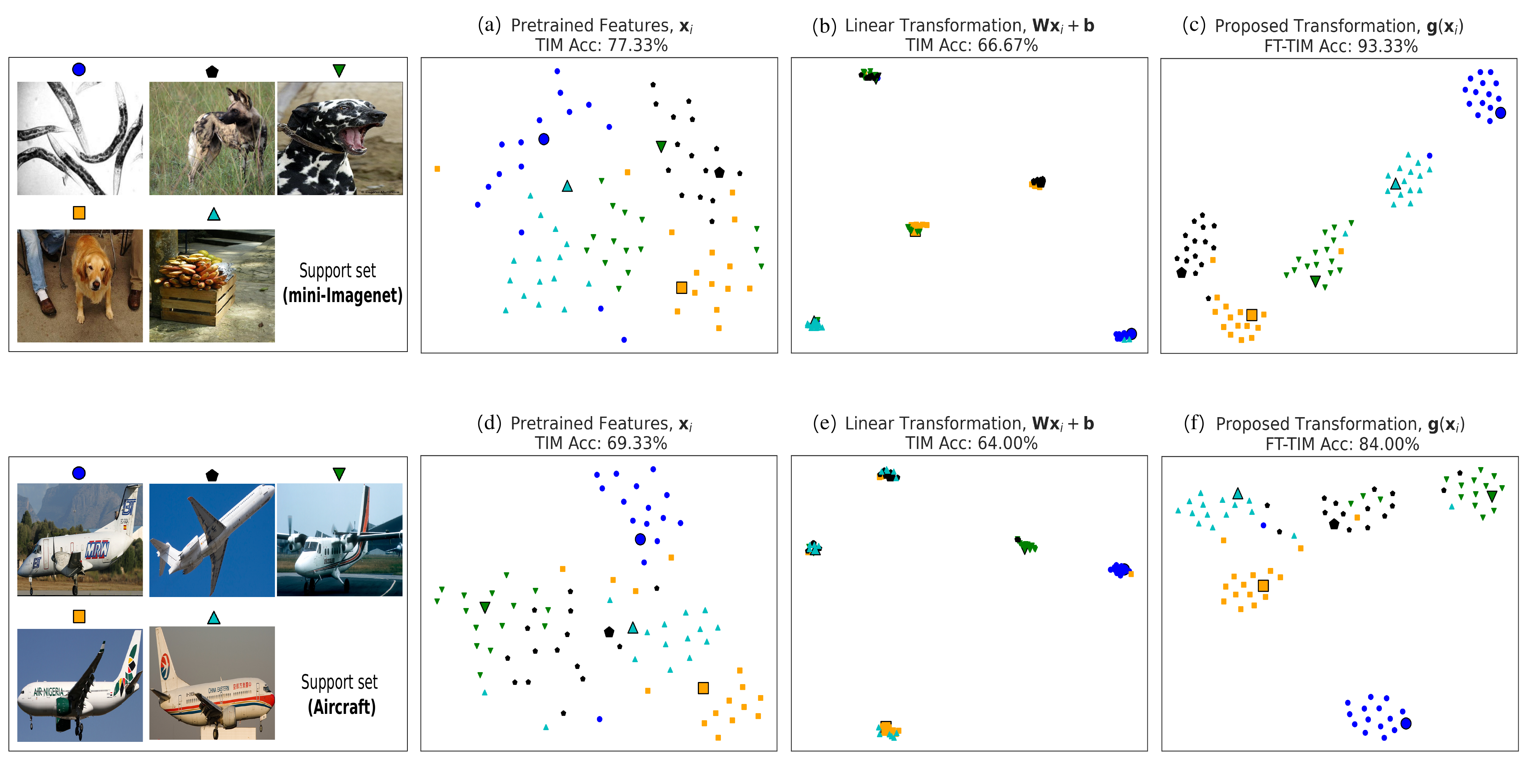}
\caption{TSNE plots depicting the feature space with or without the proposed feature transformation in \eqref{eq:transformation}. The support images of the 1-shot task are provided (leftmost). The bigger markers correspond to the support image in each class.}
\label{fig:tsne}
\end{figure*}

\textbf{Related works}: Our method is in line with recent transductive methods in the few-shot learning literature, e.g. \cite{hong2021reinforced,liu2018learning,Dhillon2020A,Laplacian,boudiaf2020transductive}, among others. Transductive inference performs class predictions jointly for all the unlabeled query samples of the task, rather than one sample at a time as in inductive inference. 
For instance, TPN \cite{liu2018learning} uses label propagation along with episodic training and a specific network architecture; the goal was to learn how to propagate labels from the support to the query samples. CAN-T \cite{can} is another meta-learning based transductive method, which uses attention mechanisms to propagate labels to unlabeled samples. The authors of \cite{Laplacian} proposed a method based on graph clustering, which regularizes the inductive predictions of query samples with a Laplacian term.

Many transductive few-shot methods focused on strategies for fine-tuning a pre-trained model during inference. For instance, the entropy fine-tuning in \cite{Dhillon2020A} re-trains the whole network, performing costly gradient updates over all the parameters during inference. Transductive Information Maximization (TIM) \cite{boudiaf2020transductive} proposes an entropy-based fine-tuning loss, which maximizes the mutual information between the query features and their label predictions for a few-shot task at inference, while minimizing the cross-entropy loss on the support set. However, instead of retraining the whole network, \cite{boudiaf2020transductive} only fine-tunes the softmax layer on top of the fixed pre-trained features. 
This showed substantial improvements over retraining the whole network. In addition to its recent successful use in transductive few-shot classification \cite{boudiaf2020transductive,Dhillon2020A}, it is worth noting that entropy minimization is widely used in semi-supervised learning \cite{grandvalet2005semi,berthelot2019mixmatch}, and has been successfully used recently in unsupervised domain adaptation \cite{liang2020we} and unsupervised representation learning \cite{ym-self-labelling}.

\textbf{Our Contribution}: Fine-tuning the classifier on top of fixed pre-trained features from the base classes may not take full advantage of the expressive power of the task-specific feature space. A standard linear transformation causes overfitting when dealing with limited supervision. In this regard, we propose
a simple yet effective norm-induced feature transformation, which is fine-tuned to emphasize class-specific feature dimensions while hindering the effect of non-relevant dimensions that may cause overfitting in a few-shot setting. Our non-linear transformation could be understood as a 
re-parametrization of the feature space, which disentangles the representations of the different classes in a task-specific manner. While our motivation is conceptually similar to early kernel-based metric-learning methods \cite{fink2005object}, in which non-linear transformations are implicit, our transformation is explicit. We provide an interpretation of our transformation in the basic case of few-shot inference with K-means clustering. Furthermore, we give an interesting bound-optimization link between K-means and entropy minimization. This emphasizes why our feature transformation is useful in the context of entropy minimization, which is widely used in learning, even beyond few-shot classification. We report comprehensive experiments, showing that the proposed transformation could yield consistent improvements over various one-shot benchmarks, outperforming recent state-of-the-art methods.

\section{Task Adaptive Feature Transformation for One-Shot Learning}
In the one-shot setting, we are given a labeled support set $\XXs$ with $C$  novel test classes, where each novel class has one labeled example. The objective is to accurately classify unlabeled unseen query sample set $\XXq$ from these $C$ classes. Let $\ftheta$ denote the embedding function of a deep convolutional neural network, with parameters $\phi$ and $\xx_i=\ftheta(\yy_i) \in \mathbb{R}^d$ is the features of a given sample $\yy_i$. $\ftheta$ is pre-trained from a labeled set $\XXb$, via a standard cross-entropy loss, with base classes that are different from the test classes of $\XXs$ and $\XXq$.

\textbf{Proposed Transformation}:
\label{proposed-transformation}
The proposed feature transformation is performed during fine-tuning, which is derived from minimizing an entropy-based loss function for the target one-shot task. We will detail the entropy-based loss function below, and draw an interesting connection to the basic K-means objective through bound optimization. For now, let us introduce our non-linear transformation, which reads as follows
for each $L_2$-normalized pre-trained feature vector $\xx_i$ of a given target few-shot task:
\begin{align}\label{eq:transformation}
\begin{split}
\g(\xx_i, \W) & = - \frac{1}{2}\left ( \|\xx_i - \ww_1\|^2, \ldots, \|\xx_i-\ww_d\|^2 \right )^T
\end{split}
\end{align}
where $\xx_i$ is the initial feature vector either from the support set $\XXs$ or query set $\XXq$, and superscript $T$ denotes the transpose operator. 
We introduce a learnable transformation matrix $\W = [\ww_1^T\ldots \ww_d^T] \in \mathbb{R}^{d\times d}$, which is updated during the fine-tuning procedure. To understand the effect of our transformation, let us first consider a transductive inference with a basic K-means clustering of the transformed features of the query set. 
This could be done by optimizing the following mixed objective:
\begin{equation}
\label{basic-K-means}
\mathcal{J}(\W, \boldsymbol{\theta}, \boldsymbol{Q}) = 
\sum_{i \in \XXq}\sum_{c=1}^C q_{ic} \|\boldsymbol{\theta}_c - \g(\xx_i, \W)\|^2
\end{equation}
where $\boldsymbol{\theta} = (\boldsymbol{\theta}_c)_{1 \leq c \leq C}$ represent class prototypes, and 
$\boldsymbol{Q}$ is the $|\XXq|$-by-$C$ matrix whose rows are given by binary assignment 
simplex vectors $\boldsymbol{q}_i =  (q_{ic})_{1 \leq c \leq C} \in \{0, 1\}^C$: $q_{ic} = 1$ if sample $\xx_i$ is assigned to class $c$ and $q_{ic} = 0$ otherwise. Alternating iterative minimization of the mixed K-means objective in Eq. \eqref{basic-K-means} with respect to $\W$, $\boldsymbol{\theta}$ and $\boldsymbol{Q}$ could be viewed as a joint task-adaptive metric learning and clustering, and has a clear interpretation of the effect of the proposed transformation. Optimization with respect to transformation parameters $\W$ encourages new features $\g(\xx_i, \W)$ to approach their current class prototypes (or means) $\boldsymbol{\theta}_c$, thereby disentangling the class representations; see the TSNE-plot illustrations in Figure \ref{fig:tsne}. Clearly, given current assignments $q_{ic}^j$ at iteration $j$, the optimal $\boldsymbol{\theta}_c$ minimizing \eqref{basic-K-means} corresponds to the mean of features within class $c$: $\boldsymbol{\theta}_c^j = \frac{\sum_{i \in \XXq} \g(\xx_i, \W)}{|\XXq|}$. Also, given both $q_{ic}^j$ and $\boldsymbol{\theta}_c^j$, it is clear that the objective in  Eq. \eqref{basic-K-means} contains, for each sample, an $L_2$ distance between the transformed feature of the sample and its current-class prototype. Therefore, optimization with respect to $\W$ encourages the transformed feature to align with its current-class prototype.  

It is important to note that the specific norm-induced form of $\mathbf{g}$ that we propose in \eqref{eq:transformation} constrains implicitly the transformation, hindering the effects of non-relevant dimensions that may cause over-fitting. An unconstrained transformation $\mathbf{g}$, such as a neural net for instance, trained jointly with K-means, might yield trivial solutions i.e. bringing all of the transformed features in the same cluster. This difficulty is known in the context of deep clustering \cite{Jabi2021}. {\em In fact, the norm-based form of each component in transformation \eqref{eq:transformation} forces some dimensions to approach zero, when aligning with the prototypes}. Identity, for instance, is not recoverable under this constrained form, unlike an unconstrained neural-net transformation.

Let us now give more details in terms of the TSNE plots in Figure \ref{fig:tsne}. We  randomly sample a 1-shot task with 5 test classes from each of the \textit{mini}ImageNet and Aircraft datasets. The task from \textit{mini}ImageNet contains three fine-grained classes sampled from the generic dog category: `Golden retriever', `Dalmatian', and `African hunting dog'. The other two classes are from generic categories: `Nematode' and `Crate'. The task from the Aircraft dataset contains the fine-grained categories of 5 different airplane models, which are visually quite similar to each other. The leftmost plots refer to the entropy-based loss in \eqref{eq:tim_objective} fine-tuned on top of the initial fixed pre-trained feature vector $\xx_i$ without the feature transformation. In this case, only the classifier weights $\{\boldsymbol{\theta}_c\}_{c=1}^{C}$ are updated, which results in accuracies of 77.3\% for \textit{mini}Imagenet and 69.33\% for the Aircraft. Note that the support images (with bigger markers) are not well separated along with their corresponding query samples in the pre-trained feature space learned from the base classes. Linear transformation for feature transformation during fine-tuning causes overfitting with limited supervision, as can be seen in figures 1(b) and 1(e). In this case, features from different classes are brought in the same cluster and the estimates of
errors could be found from the respective classification accuracy given at the top of each plot. Finally, if we utilize the proposed transformation in Eq. \eqref{eq:transformation} on top of the initial pre-trained features, we achieve a better spread-out task adaptive feature space as shown in the rightmost TSNE plot 1(c) and 1(f), with boosted accuracies of 93.33\% (16\% improvement) and 84.00\% (13\% improvement), respectively for \textit{mini}Imagenet and Aircraft dataset.

\textbf{Entropy-based loss function}:
In our case, the transformation matrix $\W$ is learned by fine-tuning a transductive information maximization (TIM) loss \cite{boudiaf2020transductive}.
TIM loss is a combination of cross-entropy defined over the support set $\XXs$ and a mutual information term, which includes two Shannon entropies: The entropy of posterior predictions (i.e., the softmax outputs of the network) and the entropy of the marginal probabilities of the classes, both defined over the query set $\XXq$:
	\begin{align}\label{eq:tim_objective}
	    \begin{split}
            \mathcal{L}(\W, \boldsymbol{\theta}) = \overbrace{-\frac{\lambda}{|\XXs|} \sum_{i \in \XXs} \sum_{c=1}^C y_{ic} \log (p_{ic})}^{\text{cross-entropy}}\\ 
             \underbrace{- \frac{\alpha}{|\XXq|} \sum_{i \in \XXq}\sum_{c=1}^C p_{ic} \log (p_{ic})}_{\text{conditional entropy}} 
	        &+\underbrace{\sum_{c=1}^C \widehat{p}_{c} \log \widehat{p}_{c}}_{\text{marginal entropy}}
	    \end{split}
	\end{align}
where
\[p_{ic} = \text{s}(\boldsymbol{\theta}_c, \W, \xx_{i}) = \frac{\exp \left (-\frac{\tau}{2}\|\boldsymbol{\theta}_c - \g(\xx_i, \W) \|^2 \right )}{\sum_k \exp \left (-\frac{\tau}{2}\|\boldsymbol{\theta}_k - \g(\xx_i, \W)\|^2 \right )} \]
denotes the softmax probability outputs, $\widehat{p}_c = \frac{1}{|\XXq|} \sum_{i \in \XXq} p_{ic}$ is the marginal probability of class $c$, and $y_{ic} \in \{0, 1\}$ the ground-truth labels for the support samples and $\tau$ is the temperature parameter.
Minimizing the conditional entropy pushes the network probability predictions toward the vertices of the simplex, yielding confident predictions. The marginal entropy term avoids the trivial single-class solutions that might result from conditional entropy minimization. 

\textbf{On the link between entropy and K-means}:
We now change gear and show an interesting bound-optimization link between the conditional entropy in Eq. \eqref{eq:tim_objective} and K-means in Eq. \eqref{basic-K-means}. This link further clarifies why our feature transformation is useful in the context of entropy minimization. In addition to its 
recent successful use in few-shot classification \cite{boudiaf2020transductive,Dhillon2020A}, entropy is widely used in semi-supervised learning \cite{grandvalet2005semi,berthelot2019mixmatch}, and 
has been successfully used recently in unsupervised domain adaptation \cite{liang2020we} and 
unsupervised representation learning \cite{ym-self-labelling}. Therefore, connecting entropy minimization to K-means could provide interesting insights even beyond few-shot classification. To show the link, let us first decompose 
the conditional entropy in \eqref{eq:tim_objective}:
\begin{equation}
\label{conditional-entropy-decomposition}
 \underbrace{\sum_{i, c} \text{s} (\boldsymbol{\theta}_c, \W, \xx_{i} ) \|\boldsymbol{\theta}_c - \g(\xx_i, \W)\|^2}_{\mathcal{H}(\W, \boldsymbol{\theta}): \, \text{Clustering}} +  \underbrace{\sum_i l(\boldsymbol{\theta}, \W, \xx_{i})}_{\text{Prototype} ~\text{dispersion}}
\end{equation}
where $l(\boldsymbol{\theta}, \W, \xx_{i}) = \log \sum_c \exp \left ( -\frac{\tau}{2}\|\boldsymbol{\theta}_c - \g(\xx_i, \W) \|^2 \right )$. 
Minimizing the prototype dispersion encourages large distances between the prototypes and the features of all data points.  
Term $\mathcal{H}(\W, \boldsymbol{\theta})$ in Eq. \eqref{conditional-entropy-decomposition} is closely related to basic K-means \eqref{basic-K-means} from a bound-optimization perspective, although it seems more complex. The following shows that 
optimizing a soft K-means could be viewed as an approximate Majorize-Minimize (MM) algorithm for optimizing $\mathcal{H}(\W, \boldsymbol{\theta})$. 

Given a function $\mathcal{H}(\W, \boldsymbol{\theta})$, the 
general MM paradigm minimizes iteratively a tight upper bound on $\mathcal{H}$:
\begin{align}
\label{aux-function}
 \mathcal{H}(\W, \boldsymbol{\theta}) &\leq \mathcal{A}^j(\W, \boldsymbol{\theta}) ~\forall~\W, \boldsymbol{\theta} \nonumber \\
 \mathcal{H}(\W^j, \boldsymbol{\theta}^j) &= \mathcal{A}^j(\W^j, \boldsymbol{\theta}^j)
\end{align}
where $j$ is the current iteration index. An upper bound satisfying the tightness condition in \eqref{aux-function}  is often referred to as {\em auxiliary function} of the original objective $\mathcal{H}$. 
It is straightforward to verify 
that minimizing $\mathcal{A}^j$ iteratively guarantees the original objective does not increase: 
$\mathcal{H}(\W^{j+1}, \boldsymbol{\theta}^{j+1}) \leq \mathcal{A}^j(\W^{j+1}, \boldsymbol{\theta}^{j+1}) \leq \mathcal{A}^j (\W^j, \boldsymbol{\theta}^j) = \mathcal{H}(\W^j, \boldsymbol{\theta}^j)$.

\begin{algorithm}[tb]
\caption{FT-TIM inference}
\label{alg:algorithm}
\textbf{Input}: Pre-trained encoder $\ftheta$, One-shot task $\{\XXs, \XXq\}$\\
\textbf{TIM Parameters}: Number of Fine-tuning iterations $I$, temperature $\tau$, loss weights $\{\lambda, \alpha\}$,learning rate for $\{\boldsymbol{\theta}_c\}_{c=1}^{C}$ \\
\textbf{FT Parameters}: Learning rate for transformation matrix $\W$, iteration number $I_W$ to start transformation \\
\begin{algorithmic}[1] 
\STATE L2 normalization: $\xx_i = \ftheta(x_i) /\norm{\ftheta(x_i)}_2$, $\forall x_i \in {\XXs \cup \XXq}$
\STATE Initialize $\W = \mathbf{X}_s^T\mathbf{X}_s$, where $\mathbf{X}_s=\{\xx_s\}, \forall \xx_s \in\XXs$
\STATE Initialize $iter = 0$
\WHILE{$iter\leq I$}
\STATE $\xx^{'}_i = \xx_i$
\IF {$iter\geq I_W$}
\STATE Compute transformed feature $\g(\xx_i)$ from \eqref{eq:transformation}
\STATE L2 normalization: $\xx^{'}_i = \frac{g(\xx_i)}{\|g(\xx_i)\|_2}$ .
\ENDIF
\STATE Compute $p_{ic} = \text{softmax}(-\frac{\tau}{2}\|\boldsymbol{\theta}_c - \xx^{'}_{i}\|^2), \forall i, \forall c$
\STATE Compute marginal $\widehat{p}_c = \frac{1}{|\XXq|} \sum_{i \in \XXq} p_{ic}, \forall c$
\STATE Compute Loss in \eqref{eq:tim_objective}
\IF {$iter\geq I_W$}
\STATE Update $\W$
\ENDIF
\STATE Update $\{\boldsymbol{\theta}_c\}_{c=1}^{C}$
\ENDWHILE
\STATE \textbf{return} Query predictions $\widehat{y}_i = \argmax_{c} p_{ic}, \forall i \in \XXq$
\end{algorithmic}
\end{algorithm}

\begin{prop}
$\mathcal{H}(\W, \boldsymbol{\theta})$ is upper bounded by the following soft K-means objective for any set of soft simplex assignment variables $\boldsymbol{q}_i =  (q_{ic})_{1 \leq c \leq C} \in [0, 1]^C$, $i \in \XXq$:
\begin{equation}
\label{upper-bound-soft-keans}
\mathcal{H}(\W, \boldsymbol{\theta}) \leq \mathcal{J}(\W, \boldsymbol{\theta}, \boldsymbol{Q}) + \frac{\tau}{2} \sum_i \boldsymbol{q}_i^T \log \boldsymbol{q}_i
\end{equation}
Furthermore, given parameters $\W^j$ and prototype $\boldsymbol{\theta}^j = (\boldsymbol{\theta}_c^j)_{1 \leq c \leq C}$ at iteration $j$, choosing specific expressions $q_{ic} = \text{s} (\boldsymbol{\theta}_c^j, \W^j, \xx_{i} )$ in 
 upper bound \eqref{upper-bound-soft-keans} yields an approximate auxiliary function on $\mathcal{H}(\W, \boldsymbol{\theta})$ when $\tau$ is small ($\tau \rightarrow 0$).  
\end{prop}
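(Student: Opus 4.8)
The statement has two parts: the inequality \eqref{upper-bound-soft-keans}, which should hold for every soft assignment $\boldsymbol{Q}$, and the auxiliary-function property obtained by freezing each $\boldsymbol{q}_i$ at the current softmax row $p_{ic}^{j}=\text{s}(\boldsymbol{\theta}_c^{j},\W^{j},\xx_i)$. I would derive both from a single convex-duality fact: writing $d_{ic}:=\norm{\boldsymbol{\theta}_c-\g(\xx_i,\W)}^{2}$ and $Z_i:=\sum_k e^{-\frac{\tau}{2}d_{ik}}=e^{\,l(\boldsymbol{\theta},\W,\xx_i)}$, the softmax row $\boldsymbol{p}_i=(p_{ic})_c$ is the unique minimizer, over the probability simplex, of the entropy-regularized linear cost $\boldsymbol{q}_i\mapsto\sum_c q_{ic}d_{ic}+\tfrac{2}{\tau}\sum_c q_{ic}\log q_{ic}$, whose minimum value is $-\tfrac{2}{\tau}\log Z_i$ (Legendre duality between log-sum-exp and the negative entropy on the simplex). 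Since $\mathcal{H}$ and $\mathcal{J}$ both split as sums over $i\in\XXq$, I would establish a per-point statement and then add up.

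\textbf{Step 1 (the inequality).} Fixing $i$, from $d_{ic}=-\tfrac{2}{\tau}(\log p_{ic}+\log Z_i)$ one reads off the exact identity $\sum_c p_{ic}d_{ic}=-\tfrac{2}{\tau}\log Z_i+\tfrac{2}{\tau}\mathcal{E}(\boldsymbol{p}_i)$, where $\mathcal{E}(\boldsymbol{p}_i):=-\sum_c p_{ic}\log p_{ic}$ is the Shannon entropy of the row; and Jensen's inequality for the concave logarithm with weights $q_{ic}$ — equivalently suboptimality of $\boldsymbol{q}_i$ for the regularized cost above — gives $-\log Z_i\le\tfrac{\tau}{2}\sum_c q_{ic}d_{ic}+\sum_c q_{ic}\log q_{ic}$, with equality iff $\boldsymbol{q}_i=\boldsymbol{p}_i$ and with Jensen gap $\mathrm{KL}(\boldsymbol{q}_i\|\boldsymbol{p}_i)$. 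Rescaling the second relation, substituting into the first, and summing over $\XXq$ then yields a bound of the same shape as \eqref{upper-bound-soft-keans} — the soft K-means cost $\mathcal{J}$ plus an entropic term in $\boldsymbol{Q}$ — provided the nonnegative softmax-entropy term $\sum_i\mathcal{E}(\boldsymbol{p}_i)$ is carried along; equivalently, keeping the Jensen gap rather than discarding it turns the chain into an exact identity of the form $\mathcal{H}=\mathcal{J}+(\text{entropy of }\boldsymbol{Q})+(\text{softmax entropy})-(\text{KL of }\boldsymbol{Q}\text{ vs softmax})$, which is a convenient starting point.

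\textbf{Step 2 (approximate auxiliary function).} I would then set $q_{ic}=p_{ic}^{j}$ and let $\mathcal{A}^{j}(\W,\boldsymbol{\theta})$ be the right-hand side of \eqref{upper-bound-soft-keans}. The inequality half of \eqref{aux-function} is Step 1. For the tightness half, evaluate at $(\W^{j},\boldsymbol{\theta}^{j})$: since $p_{ic}^{j}=\text{s}(\boldsymbol{\theta}_c^{j},\W^{j},\xx_i)$, the soft K-means cost $\mathcal{J}(\W^{j},\boldsymbol{\theta}^{j},\boldsymbol{Q}^{j})=\sum_{i,c}p_{ic}^{j}\norm{\boldsymbol{\theta}_c^{j}-\g(\xx_i,\W^{j})}^{2}$ coincides exactly with $\mathcal{H}(\W^{j},\boldsymbol{\theta}^{j})$, so the tightness defect collapses to the single entropic term evaluated at $\boldsymbol{p}^{j}$, namely $\tfrac{\tau}{2}\sum_i(\boldsymbol{p}_i^{j})^{T}\log\boldsymbol{p}_i^{j}=-\tfrac{\tau}{2}\sum_i\mathcal{E}(\boldsymbol{p}_i^{j})$, whose absolute value is at most $\tfrac{\tau}{2}\,|\XXq|\log C$. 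Hence \eqref{aux-function} holds up to an error that is $O(\tau)$ uniformly in the iterate and vanishes as $\tau\to0$, which is exactly the sense in which $\mathcal{A}^{j}$ is an \emph{approximate} auxiliary function. The telescoping argument recalled below \eqref{aux-function} then shows that alternating the soft K-means updates on $(\boldsymbol{\theta},\W,\boldsymbol{Q})$ decreases $\mathcal{H}$ up to this controlled slack, which is the advertised bound-optimization link between K-means and the conditional entropy.

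\textbf{Main obstacle.} Each individual ingredient — Jensen, optimality of the softmax, the telescoping inequality under \eqref{aux-function} — is a one-line argument; the delicate point is the accounting of the entropic residual. The $\log$--$\exp$ duality does not produce \eqref{upper-bound-soft-keans} on the nose but carries an extra softmax-entropy term (equivalently a $\mathrm{KL}$ correction), so \eqref{upper-bound-soft-keans} should be read as the leading-order form of an exact identity, and upgrading ``approximate'' into a quantitative guarantee — in particular isolating the small-$\tau$ regime in which the residual is negligible — reduces to the elementary bound $0\le\mathcal{E}(\boldsymbol{p}_i)\le\log C$ together with the $\tau$-scaling of the relevant coefficients.
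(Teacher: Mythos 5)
Your route is in substance the paper's own: the paper solves the KKT conditions of the entropy\--regularized assignment problem over the simplex, which is exactly the log\--sum\--exp/negative\--entropy Legendre duality you invoke, and your Step~2 tightness computation (defect $\tfrac{\tau}{2}\sum_i (\boldsymbol{p}_i^j)^T\log \boldsymbol{p}_i^j$, of magnitude at most $\tfrac{\tau}{2}|\XXq|\log C$) is precisely the paper's ``tight at the current solution when $\tau\to 0$'' claim, stated more quantitatively. What you add over the paper is the exact identity $\mathcal{H}=\mathcal{J}+(\text{entropy of }\boldsymbol{Q})+(\text{softmax entropy})-(\text{KL correction})$, which makes explicit where the word ``approximate'' enters; the paper simply asserts that plugging the KKT minimizer into the bound and ``using the fact that $\tau$ is small'' yields \eqref{upper-bound-soft-keans}.

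There is, however, one piece of bookkeeping that you flag but do not resolve, and that the paper silently gets wrong as well: the duality pairs the cost $\sum_c q_{ic}\norm{\boldsymbol{\theta}_c-\g(\xx_i,\W)}^2$ with the entropy weighted by $\tfrac{2}{\tau}$, not $\tfrac{\tau}{2}$ as written in \eqref{upper-bound-soft-keans} (equivalently, the true KKT minimizer of the bound as stated is a softmax at inverse temperature $\tfrac{2}{\tau}$, not $\text{s}(\boldsymbol{\theta}_c,\W,\xx_i)$). Consequently the residual in your Step~1 identity is $\tfrac{2}{\tau}\sum_i\mathcal{E}(\boldsymbol{p}_i)$, and this does \emph{not} vanish as $\tau\to 0$: in that limit the softmax flattens toward uniform, $\mathcal{E}(\boldsymbol{p}_i)\to\log C$, and the coefficient $\tfrac{2}{\tau}$ diverges. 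So the ``leading\--order form of an exact identity'' reading justifies the tightness half of the proposition (where the relevant coefficient really is $\tfrac{\tau}{2}$ and the defect is $O(\tau)$), but not the ``for any $\boldsymbol{Q}$'' inequality half: for a low\--entropy $\boldsymbol{Q}$ (e.g.\ hard nearest\--prototype assignments) the right\--hand side of \eqref{upper-bound-soft-keans} is $\sum_i\min_c\norm{\boldsymbol{\theta}_c-\g(\xx_i,\W)}^2$, which sits \emph{below} $\mathcal{H}$ by an amount that stays $O(1)$ as $\tau\to 0$. This is a defect of the proposition as stated rather than of your argument alone, but a complete write\--up should either restrict the class of admissible $\boldsymbol{Q}$ (those at least as diffuse as the current softmax, so that the $\mathrm{KL}$ and entropy residuals have the right signs) or state the inequality with the $\tfrac{2}{\tau}$ weighting and the explicit $\tfrac{2}{\tau}\sum_i\mathcal{E}(\boldsymbol{p}_i)$ correction, for which your identity gives an exact and unconditional bound.
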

\begin{proof}
The upper bound in \eqref{upper-bound-soft-keans} is convex w.r.t $\boldsymbol{Q}$ as it is the sum of linear and convex functions. Solving the KKT conditions for minimizing this bound, s.t. 
simplex constraint on each $\boldsymbol{q}_i$, yields closed-form solutions:
$\tilde{q}_{ic} = \text{s} (\boldsymbol{\theta}_c, \W, \xx_{i})$
The inequality in \eqref{upper-bound-soft-keans} follows directly from plugging these optimal solutions in the upper-bound in \eqref{upper-bound-soft-keans} and using the fact 
that $\tau$ is small ($\tau \rightarrow 0$). Finally, it is straightforward to verify that the specific choice 
$q_{ic} = \text{s} (\boldsymbol{\theta}_c^j, \W^j, \xx_{i} )$ makes 
the upper bound in \eqref{upper-bound-soft-keans} tight at the current solution and, hence, an auxiliary function, when temperature $\tau \rightarrow 0$. 
\end{proof}
\begin{table*}[tbp]
\begin{center}
\begin{adjustbox}{max width=\linewidth}
\begin{tabular}{lccccc}
\toprule
\textbf{Methods}& \textbf{Network} &\multicolumn{1}{c}{\textbf{\textit{mini}ImageNet}}&\multicolumn{1}{c}{\textbf{\textit{tiered}ImageNet}}&\multicolumn{1}{c}{\textbf{CUB}}&\multicolumn{1}{c}{\textbf{Aircraft}}\\
\midrule

MAML \cite{Finn2017ModelAgnosticMF} & ResNet-18 & 49.61  &  -& 68.42&- \\
TPN \cite{liu2018learning} &ResNet-12 & 59.46 & - & - & -\\
Entropy-min \cite{Dhillon2020A} &ResNet-12 & 62.35  & 68.36  &-&-\\
DPGN \cite{yang2020dpgn} &ResNet-18 & 66.63 & 70.46 &-&-\\
CAN+T \cite{can} &ResNet-18 & 67.19 & 73.21 &-&-\\
DSN-MR \cite{simon2020adaptive} &ResNet-18 & 64.60 & 67.39 &-&-\\
MetaoptNet \cite{lee2019meta} &ResNet-18 &62.64& 65.99 & - &- \\
LaplacianShot \cite{Laplacian} &ResNet-18 &70.89& 77.60 & 79.93 &- \\
TIM \cite{boudiaf2020transductive} &ResNet-18 &72.77  & 80.80 &82.24&83.06 \\
RAP-LaplacianShot \cite{hong2021reinforced} &ResNet-12 & 74.29 & -& 83.59 & - \\
FT-TIM (ours) &ResNet-18 &\textbf{75.00}  & \textbf{83.45} &\textbf{85.54} & \textbf{84.47} \\
\hline
AWGIM \cite{guo2020attentive} & WRN & 63.12 & 67.69&-&- \\ 
Entropy-min \cite{Dhillon2020A} & WRN & 65.73 & 73.34&-&- \\
SIB \cite{hu2020empirical}& WRN & 70.0 & 70.90 &-&- \\
BD-CSPN \cite{liu2019prototype} & WRN & 70.31 & 78.74 &-&-\\
SIB+E$^3$BM \cite{liu2020ensemble} & WRN & 71.4 & 75.6 &-&- \\
LaplacianShot \cite{Laplacian} & WRN &73.44& 78.80&-&-\\
IFSL \cite{yue2020interventional} & WRN & 73.51 & 83.07 &-&-\\
TIM \cite{boudiaf2020transductive} & WRN &77.8 & 82.1&-&- \\
FT-TIM (ours) & WRN &\textbf{79.22} &\textbf{85.06} &-&- \\
\bottomrule
\end{tabular}
\end{adjustbox}
\end{center}
\caption{Average one-shot accuracy (in \%) for the \textit{standard} benchmark.}
\label{tab:mini-tiered}
\end{table*}

\section{Experiments}
\textbf{Datasets}: We used four one-shot benchmarks, including both the fine-grained classification settings (\textbf{CUB} and \textbf{Aircraft}) and standard one-shot classification setting (\textbf{\textit{mini}ImageNet} and \textbf{\textit{tiered}ImageNet}). \textbf{\textit{mini}ImageNet} is a subset of the larger ILSVRC-12 dataset \cite{ILSVRC15}. We use the standard split of 64 classes for base training, 16 for validation, and 20 for testing. \textbf{\textit{tiered}ImageNet} \cite{ren18fewshotssl} is also a subset of the ILSVRC-12 dataset, but with 608 classes instead. We split the dataset into 351 classes for base training, 97 for validation and 160 for testing. \textbf{CUB} \cite{wah2011caltech} is a fine-grained image classification dataset with 200 categories. We split it into 100 classes for base training, 50 for validation and 50 for testing. \textbf{Aircraft} or FGVCAircraft \cite{maji13fine-grained} is a fine-grained image classification dataset with 100 airplane models. Following the same ratio as CUB, we split classes into 50 base classes for training, 25 validation, and 25 test classes. Images are resized to $84 \times 84$ pixels.


\textbf{Implementation Details}: The results of the proposed FT-TIM is reproduced and evaluated in the same settings as in \cite{Laplacian,boudiaf2020transductive} for fair comparisons. The network models are trained with cross-entropy loss on the base classes. 
We utilize the same publicly available pre-trained models of \cite{Laplacian,boudiaf2020transductive} for \textit{mini}ImageNet, \textit{tiered}ImageNet, and CUB. For the Aircraft dataset, we train the model according to the same protocol. The evaluation is done on two different setups of 5-way one-shot benchmark: 1) \textit{Standard one-shot benchmark}, 15 samples per class in the query set for each task, and the average accuracy over query sets are reported. 2) \textit{Semi-supervised one-shot benchmark}, where we treat 15 samples per class as the additional unlabeled samples along with the support set and report the accuracy on a separate held out test set containing 5 test samples from each class. In this setup, we compare the results with and without the proposed task adaptive feature transformation while fine-tuning entropy loss (TIM) \cite{boudiaf2020transductive} in \eqref{eq:tim_objective}. The average accuracy over 600 one-shot tasks are reported. In case of \textbf{FT-TIM}, transformation matrix $\W$ is fine-tuned with $0.01$ learning rate, starting from the 200th fine-tuning iteration, which we decide from mini-Imagenet validation set accuracy. The feature transformation weights $\W$ is initialized from the cosine similarity matrix formed with the $L_2$ normalized initial pre trained support set features. 
\begin{table*}[htbp]
\begin{center}
\begin{adjustbox}{max width=\textwidth}
\begin{tabular}{lcccc}
\toprule
\textbf{Methods} &\textbf{\textit{mini}ImageNet}&\textbf{\textit{tiered}ImageNet}&\textbf{CUB}\\
\midrule
TIM \cite{boudiaf2020transductive} &72.40  & 80.70 &82.60 \\
FT-TIM (ours) &\textbf{73.96} & \textbf{83.45} &\textbf{84.69}\\
\bottomrule
\end{tabular}
\end{adjustbox}
\end{center}
\caption{Average accuracy (in \%) for the \textit{semi-supervised} one-shot learning setup. The best results are highlighted in bold font.}
\label{tab:mini-tiered-test}
\end{table*}
\subsection{Results}

The results of the general one-shot classification are highlighted in Table \ref{tab:mini-tiered}. It can be observed that in each of the datasets and network models, the proposed \textbf{FT-TIM} which includes the proposed feature transformation consistently improves the 1-shot accuracy by $~1-3$\% in comparison to the baseline TIM \cite{boudiaf2020transductive} without the proposed transformation. Note that, the proposed FT-TIM also outperforms the other recent transductive methods such as ICIR \cite{wang2021trust} and RAP-LaplacianShot \cite{hong2021reinforced} by simply fine-tuning the feature transformation during evaluation. The similar consistent improvement is also reflected in the case of fine-grained classification on both of the \textbf{CUB} and \textbf{Aircraft} datasets in Table \ref{tab:mini-tiered}. These results clearly demonstrate that the proposed feature transformation can bring out the expressive power of the task adaptive feature space in one-shot learning. 

We again evaluate the efficacy of the proposed feature transformation in semi-supervised one-shot tasks, where additional unlabeled samples are provided along with the one-shot labeled data per novel class. The transformation weights and the classifier weights are updated during the fine-tuning with the labeled data and the additional unlabeled data in the one-shot task. Finally, the inference is performed on a separate held out test set. To observe the benefit of plugging the proposed feature transformation during fine-tuning entropy based loss, we compare the proposed FT-TIM with respect to baseline TIM  \cite{boudiaf2020transductive} without the proposed transformation. From the results in Table \ref{tab:mini-tiered-test}, we can observe that consistent improvements are achieved by FT-TIM across different datasets, number of shots, and network models. These results indicate that the proposed transformation layer, while fine-tuned on top of pre-trained features jointly with the classifier, helps to disentangle the representations of different classes in a task-specific manner.

\section{Conclusion}

In this paper, we present a simple yet effective feature transformation layer, which brings consistent improvements in transductive one-shot learning while fine-tuned on top of pre-trained features. 
The proposed transformation takes full advantage of the expressive power of the task-specific feature space. It could be understood as a re-parametrization of the feature space, which disentangles the representations of different classes in a task-specific manner. We further provided an interpretation of our transformation in the basic case of few-shot inference with K-means clustering, along with an interesting bound-optimization link between K-means and entropy minimization. This emphasizes why our feature transformation is useful in the context of entropy minimization, which is widely used in learning.

\bibliographystyle{IEEEbib}
\bibliography{readings}

\end{document}